\newtheorem{lemma}{Lemma}
\newtheorem{defn}{Definition}
\long\def\comment#1{}
\begin{document}
%
% paper title
% Titles are generally capitalized except for words such as a, an, and, as,
% at, but, by, for, in, nor, of, on, or, the, to and up, which are usually
% not capitalized unless they are the first or last word of the title.
% Linebreaks \\ can be used within to get better formatting as desired.
% Do not put math or special symbols in the title.
\title{Rectified Decision Trees: Exploring the Landscape of Interpretable and Effective Machine Learning}
%
%
% author names and IEEE memberships
% note positions of commas and nonbreaking spaces ( ~ ) LaTeX will not break
% a structure at a ~ so this keeps an author's name from being broken across
% two lines.
% use \thanks{} to gain access to the first footnote area
% a separate \thanks must be used for each paragraph as LaTeX2e's \thanks
% was not built to handle multiple paragraphs
%

\author{Yiming Li$^*$, Jiawang Bai$^*$, Jiawei Li, Xue Yang, Yong Jiang, and Shu-Tao Xia
\thanks{$^*$ indicates equal contribution.}
\thanks{All authors are with Tsinghua Shenzhen International Graduate School,  Tsinghua University, Guangdong 518055, China.}
\thanks{Corresponding author: Xue Yang (\href{mailto:xueyang.swjtu@gmail.com}{xueyang.swjtu@gmail.com}) and Shu-Tao Xia (\href{mailto:xiast@sz.tsinghua.edu.cn}{xiast@sz.tsinghua.edu.cn}).}}

% note the % following the last \IEEEmembership and also \thanks - 
% these prevent an unwanted space from occurring between the last author name
% and the end of the author line. i.e., if you had this:
% 
% \author{....lastname \thanks{...} \thanks{...} }
%                     ^------------^------------^----Do not want these spaces!
%
% a space would be appended to the last name and could cause every name on that
% line to be shifted left slightly. This is one of those "LaTeX things". For
% instance, "\textbf{A} \textbf{B}" will typeset as "A B" not "AB". To get
% "AB" then you have to do: "\textbf{A}\textbf{B}"
% \thanks is no different in this regard, so shield the last } of each \thanks
% that ends a line with a % and do not let a space in before the next \thanks.
% Spaces after \IEEEmembership other than the last one are OK (and needed) as
% you are supposed to have spaces between the names. For what it is worth,
% this is a minor point as most people would not even notice if the said evil
% space somehow managed to creep in.

% The paper headers
\markboth{Preprint, under review}%
{Shell \MakeLowercase{\textit{et al.}}: Bare Demo of IEEEtran.cls for IEEE Journals}
% The only time the second header will appear is for the odd numbered pages
% after the title page when using the twoside option.
% 
% *** Note that you probably will NOT want to include the author's ***
% *** name in the headers of peer review papers.                   ***
% You can use \ifCLASSOPTIONpeerreview for conditional compilation here if
% you desire.

% If you want to put a publisher's ID mark on the page you can do it like
% this:
%\IEEEpubid{0000--0000/00\$00.00~\copyright~2015 IEEE}
% Remember, if you use this you must call \IEEEpubidadjcol in the second
% column for its text to clear the IEEEpubid mark.

% use for special paper notices
%\IEEEspecialpapernotice{(Invited Paper)}

% make the title area
\maketitle

% As a general rule, do not put math, special symbols or citations
% in the abstract or keywords.
\begin{abstract}
Interpretability and effectiveness are two essential and indispensable requirements for adopting machine learning methods in reality. In this paper, we propose a knowledge distillation based decision trees extension, dubbed rectified decision trees (ReDT), to explore the possibility of fulfilling those requirements simultaneously. Specifically, we extend the splitting criteria and the ending condition of the standard decision trees, which allows training with soft labels while preserving the deterministic splitting paths. We then train the ReDT based on the soft label distilled from a well-trained teacher model through a novel jackknife-based method. Accordingly, ReDT preserves the excellent interpretable nature of the decision trees while having a relatively good performance. The effectiveness of adopting soft labels instead of hard ones is also analyzed empirically and theoretically. Surprisingly, experiments indicate that the introduction of soft labels also reduces the model size compared with the standard decision trees from the aspect of the total nodes and rules, which is an unexpected gift from the `dark knowledge' distilled from the teacher model. 
\end{abstract}

% Note that keywords are not normally used for peerreview papers.
\begin{IEEEkeywords}
Interpretability, Decision Trees, Deep Learning, Classification.
\end{IEEEkeywords}

% For peer review papers, you can put extra information on the cover
% page as needed:
% \ifCLASSOPTIONpeerreview
% \begin{center} \bfseries EDICS Category: 3-BBND \end{center}
% \fi
%
% For peerreview papers, this IEEEtran command inserts a page break and
% creates the second title. It will be ignored for other modes.
\IEEEpeerreviewmaketitle

\section{Introduction}
\IEEEPARstart{I}{nterpretability}, which indicates that the prediction process is interpretable, and effectiveness are two essential and indispensable criteria to evaluate whether a machine learning algorithm can be safely adopted in reality. However, a dilemma exists for existing methods. Specifically, the models with simple structure ($e.g.$, the linear regression \cite{galton1886regression}, logistic regression \cite{hosmer2013applied}, and decision trees \cite{breiman1984classification}) are naturally interpretable, but the corresponding performance is too poor to be practically applied. In contrast, the complex methods, such as forest-based algorithms and deep neural networks (DNNs), can provide attractive performance \cite{grbovic2018real, cui2018detection, li2018deep}, but their interpretability is unsatisfied. This drawback is unacceptable in many realms, especially for medical diagnosis or risk assessment.

To address this problem, the most straightforward and wildly used approach is to improve the interpretability of those complex methods. As a representative, the interpretability of forest-based algorithms ($e.g.$, random forests \cite{breiman2001} and gradient boosting decision trees (GBDT) \cite{friedman2001}) and deep neural networks (DNNs) attract the most attention. In the aspect of the forest-based algorithms, the key improvement idea is to simplify the model structure. For example, Breiman and Shang \cite{breiman1996} first proposed to simplify random forests through a single sub-tree. In \cite{meinshausen2010}, node harvest was proposed to simplify tree ensembles by using the shallow parts of the trees. Considering the simplification of tree ensembles as a selection problem, a Bayesian-based method was proposed in \cite{hara2018}. In the meanwhile, the interpretability research of DNN can be divided into three main aspects: visualizing the representations in intermediate layers of DNN \cite{zeiler2014,zhou2018}, representation diagnosis \cite{yosinski2014,zhang2018} and building explainable DNNs \cite{chen2016b,zhang2018b}. Unfortunately, although all mentioned attempts have made certain improvements, they still suffer from either relatively poor performance or insufficient interpretability.

To balance the interpretability and the performance of the model, in this paper, we explore another angle by combining the interpretable nature of decision trees (DT) and the excellent learning ability of complex models. We propose a knowledge distillation based decision trees rectification, dubbed rectified decision trees (ReDT), to explore the possibility of fulfilling the effectiveness and interpretability simultaneously. The critical difference between ReDT and DT lies in the use of soft labels in the process of tree splitting. Specifically, the label is mainly involved in two processes of the tree's splitting: \textbf{(1)} calculating the change of impurity and \textbf{(2)} determining whether the \emph{pure stopping criterion} ($i.e.$, all samples are from the same class in the node) is satisfied. Compared to using hard labels in DT, we propose to use soft labels in two aforementioned training processes in ReDT. The main contributions of this paper can be summarized as follows: 
\begin{itemize}
\item \textbf{How to obtain soft labels}. Inspired by the knowledge distillation \cite{hinton2015}, we obtain soft labels based on the predicted logits generated by a well-trained model (dubbed \emph{teacher model}). Accordingly, the proposed ReDT can utilize the excellent learning ability of the teacher model through the `dark knowledge' contained in soft labels. It is worth noting that the teacher model used in ReDT can be DNN, tree ensemble or any other classification algorithms, while the backpropagation \cite{rumelhart1985learning} of the student model is not necessarily compared with the classical knowledge distillation. Therefore the proposed method can be regarded as an attempt of a new knowledge distillation approach. Besides, different from the temperature function used in the classical knowledge distillation \cite{hinton2015,frosst2017}, we propose a novel \emph{jackknife-based method} to reduce the adverse effects of randomness and overfitting when generates soft labels.
\item \textbf{How to use soft labels}. We extend the standard DT to allow training with soft labels while preserving the deterministic splitting paths. Firstly, in ReDT, the proportion of the samples is determined based on the average of all soft labels in the node when calculating the change of impurity. Secondly, we use the \emph{pseudo label}, which is the class with the highest probability within the soft label, to determine whether all samples in the node are from the same class. We use the pseudo label instead of the soft label itself since it is almost impossible for all soft labels within a node to be the same. In this way, the stopping criterion can be better met, therefore the model has better convergence.  
\item \textbf{A novel perspective of interpretable machine learning}. Our research illustrates how to \emph{squeeze} the potential of existing interpretable models by utilizing the learning ability of complex models, which provides a new angel towards interpretable and effective machine learning.
\end{itemize}

%The essence of the aforementioned problem is actually how to balance the interpretability and the performance of the model. In this paper, we explore another angle by combining the interpretable nature of DT and the excellent learning ability of complex models. Specifically, we propose a knowledge distillation based decision trees rectification, dubbed rectified decision trees (ReDT), to explore the possibility of fulfilling the effectiveness and interpretability simultaneously. The critical difference between ReDT and DT lies in the use of soft labels in the process of tree splitting. Specifically, the label is mainly involved in two processes of the tree's splitting: \textbf{(1)} calculating the change of impurity and \textbf{(2)} determining whether the \emph{pure stopping criterion} ($i.e.$, all samples are from the same class in the node) is satisfied. The use of hard labels is needed in DT, while we introduce soft labels into these processes. 

%However, the \emph{interpretability}, which indicates that the prediction generated by the model can be interpreted, of those methods is unsatisfied. This drawback is unacceptable in many realms, such as in medical diagnosis or risk assessment. 

\section{Related Work}
%Interpretability and effectiveness are two essential and indispensable requirements for safely adopting machine learning methods in reality. Except for a few models with simple structure, such as the linear regression, logistic regression, and decision trees, that are naturally interpretable, the prediction of most existing machine learning methods cannot be explained. In contrast, those naturally interpretable methods have a relatively bad performance, which hinders their usage. Accordingly, how to balance the interpretability and effectiveness is of great significance and attracts lots of attention. This research main contains two directions: one the one hand, it directly focuses on improving the interpretability of effective models; On the other hand, it indirectly improves interpretability by compressing the model structure. 

\subsection{Interpretable and Effective Machine Learning}
How to balance the interpretability and effectiveness is of great significance and attracts lots of attention. Existing works mainly focus on how to improve the interpretability of effective methods. The research of this area can be traced back to 1996, where Breiman and Shang proposed to simplify the complex random forests through selecting a single sub-tree \cite{breiman1996}. At that time, random forest is one of the most effective learning algorithms. After that, by linking the interpretability of the random forest to the depth of trees in the forest, Meinshausen proposed to simplify tree ensembles by using the shallow parts of the trees \cite{meinshausen2010}. Besides, a Bayesian-based approach is also provided to select main rules contained in tree ensembles for model simplification to enhance interpretability. Recently, due to the widespread success of DNNs, their interpretability has attracted most of the attention. For example, \cite{zeiler2014} and \cite{zhou2018} tried to explain the prediction of DNNs through visualizing representations in intermediate layers; \cite{zhang2018b} built explainable DNNs through specifying the function of different DNN components; \cite{yosinski2014} and \cite{zhang2018} were proposed to conduct the representation diagnosis. However, all mentioned works are either suffer from either relatively poor performance or insufficient interpretability. How to construct an interpretable model with great performance is still an important open question.

\subsection{Knowledge Distillation}
Knowledge distillation (KD) is widely adopted in model compression, which compresses a big model ($e.g.$, DNNs) into a smaller one, where the compressed model preserves the function learned by the complex model. KD can be considered to be motivated by the transfer learning, which transfers knowledge learned by a large teacher model into a smaller student model by learning the class distribution \cite{hinton2015}.

The first work was proposed by Hinton et al., where a teacher-student learning paradigm was adopted \cite{hinton2015}. Specifically, softened logits obtained from the teacher DNN based on the temperature function is used to teach a small student DNN. After that, the focus of past work has been either on \textbf{(1)} improving the performance or \textbf{(2)} finding new applications for the idea. In general, to improve the performance, prior works usually introduced additional loss terms on intermediate feature maps of the student to bring them closer to those of the teacher \cite{yim2017, liu2019structured}. Recently, KD was also adopted in the study of interpretability. In \cite{frosst2017}, they used a trained DNN to create a more explainable model in the form of soft decision trees. However, soft decision trees \cite{irsoy2012} is more like a tree-shape DNN and its decision path is probabilistic rather than deterministic, therefore its interpretability is much weaker than that of standard decision trees. Besides, the proposed KD method in \cite{frosst2017} requires the backpropagation of the student model, therefore it can not be adopted directly to the standard decision trees.

\section{Proposed Method}
In this section, we first briefly review the standard decision trees and knowledge distillation in Section \ref{Prelim}. Then we discuss two processes involved in the proposed method, including the generation of soft labels and training based on obtained soft labels. Specifically, Section \ref{DK} introduces how to obtain the soft label, and Section \ref{TC} discusses the specific training and prediction process of the proposed rectified decision trees (ReDT). An analysis, which demonstrates why soft labels can reach a better performance than hard labels is provided in Section \ref{EA}.

\subsection{Preliminaries} \label{Prelim}
Let $\mathcal{D}_n$ represent a dataset consisting of $n$ $i.i.d.$ observations. Each observation has the form $(\bm{x},y)$, where $\bm{x} \in \mathbb{R}^D$ represents the $D$-dimensional features and $y \in \{1, \cdots, K\}$ is the corresponding label of $\bm{x}$.

\vspace{0.3em}
\noindent \textbf{Impurity Calculation in Standard DT. } The impurity $I$ of node $\mathcal{N}$ is calculated based on the class proportion of samples within the node. In other words, $I=T(P)$, where $T(\cdot)$ is the impurity criterion ($e.g.$, Shannon entropy), $P=(p_1, \cdots, p_K)$ such that $p_i = \frac{1}{|\mathcal{N}|} \sum_{(\bm{x},y) \in \mathcal{N} } \mathbb{I} \{y=i\}$, and $|\mathcal{N}|$ denotes the number of sample in $\mathcal{N}$. For example, if Shannon entropy is adopted as the criterion, then $I = -\sum_{i=1}^{K} p_i\log{p_i}.$ 

\vspace{0.3em}
\noindent \textbf{Ending Conditions in Standard DT. }
\begin{itemize}
    \item \emph{Pure Ending Condition. } For a splitting candidate node, if all samples within it are from the same category, the  node is considered to be \emph{pure} and the splitting will be terminated, $i.e.$, $\mathbb{I}\{\exists i \in \{1, \cdots K\},~ s.t.\  p_i =1\}$, where $p_i$ indicates the proportion of the samples with $i$-th category.
    \item \emph{Early Stopping Condition. } Except for the aforementioned pure ending condition, early stopping is also introduced in DT to alleviate overfitting. Specifically, if the number of samples within the splitting candidate node is less than a given minimum leaf size $k$, the splitting will also be terminated.
\end{itemize}

\vspace{0.3em}
\noindent \textbf{Prediction Process in Standard DT. } After a series of decisions, $\bm{x}$ will eventually fall into a leaf node, assuming that node is $\mathcal{V}$. The predicted label is also determine by $P=(p_1, \cdots, p_K)$, where $p_i = \frac{1}{|\mathcal{V}|} \sum_{(\bm{x},y) \in \mathcal{V}} \mathbb{I} \{y=i\}$.

\vspace{0.3em}
\noindent \textbf{Temperature-based Distillation. } This method was first proposed in the compression of deep neural networks \cite{hinton2015}, which generates a softened version of the final output of a teacher DNN. The distilled knowledge will then be used to train a smaller \emph{student DNN} to transfer the generalization ability of the teacher model.  This method is especially effective when the distilled sample is already used for the training of the teacher model since the predicted logit of training samples is often close to the one-hot vector.  

Specifically, given a well-trained teacher DNN $f_{\bm{\theta}}(\cdot)$, and a sample $\bm{x}$, let $f_{\bm{\theta}}(\bm{x}) = (p_1, \cdots, p_K)$ indicates the predicted logits of $\bm{x}$. The distilled knowledge $L_{soften}(\bm{x})$ ($i.e.$, soften logits) is obtained through 
\begin{equation}
    L_{soften}(\bm{x}) = \text{softmax}(f_{\bm{\theta}}(\cdot)/T),
\end{equation}
where $T$ is a non-negative given hyper-parameter (dubbed \emph{temperature}). The higher the temperature, the soften the distilled knowledge. 

In what follows, we give the details about the proposed ReDT.
\subsection{The Generation of Soft Labels}\label{DK}
\begin{defn} [Hard Label and Soft Label]\label{def:labels}
\ 

\begin{itemize}
    \item The hard label $\bm{y}_{hard}$ indicates the one-hot representation of $y$, $i.e.$, $\bm{y}_{hard} = \left(\bm{y}_{hard}^{(1)}, \cdots, \bm{y}_{hard}^{(K)}\right) \in \{0,1\}^K$, and the $c$-th component of the hard label $\bm{y}_{hard}^{(c)}=1$ means that the sample $\bm{x}$ belongs to the class $c$.
    \item Given a well-trained classifier with parameter $\bm{\theta}$, the soft label of $\bm{x}$, $i.e.$, $\bm{y}_{soft}$,  is the predicted logits of $\bm{x}$, $i.e.$, $\bm{y}_{soft} = f_{\bm{\theta}}(\bm{x})$, where $f_{\bm{\theta}}(\bm{x})$ is the generated logits of $\bm{x}$. 
\end{itemize}

\end{defn}

From the perspective of statistical machine learning, the training of the model can be regarded as an approximation process toward the (unknown) latent distribution $\mathbb{P}_{X \times Y}$. Intuitively, it is extremely difficult to recover the ground-truth distribution from $\{(\bm{x}, y)\}$ directly, since label $y$ contains less distribution information. In contrast, the output logits ($i.e.$, the output probability vector) of a well-trained classifier consists of a significant amount of useful information of the distribution compared with the original label $y$ itself. Inspired by this idea, we propose to use the \emph{soft label} instead of the \emph{hard label}, as defined in Definition \ref{def:labels}, in the training process of student model ($i.e.$, ReDT). The obtained soft labels can be regarded as the distilled knowledge from a well-trained \emph{teacher model}, which utilizes its excellent learning ability. This idea is also partly supported by \cite{hinton2015} where a softened version of the final output of a teacher network is used to teach information to a small student network.

\begin{algorithm}[!ht]
   \caption{$M$-folds Jackknife-based Distillation Method.}
   \label{alg: Jackknife} 
\begin{algorithmic}[1]
\REQUIRE ~~\\
Training dataset $\mathcal{D}=\{(\bm{x},y)\}$;
\ENSURE ~~\\ 
Training dataset $\mathcal{D}'=\{(\bm{x},y_{soft})\}$ with soft labels;
\STATE Randomly divide the training set $\mathcal{D}$ into $M$ equal and disjoint subsets $\mathcal{D}_1, \cdots, \mathcal{D}_M$.
    \FOR{$i=1 \cdots M$}        
        \STATE Train teacher model $f_{\bm{\theta}}(\cdot)$ with samples $\mathcal{D} - \mathcal{D}_i$
        \STATE Generate samples with soft labels through 
        $$\mathcal{D}_i' = \{(\bm{x}, y_{soft})|y_{soft} = f_{\bm{\theta}}(\bm{x}), (\bm{x}, y) \in \mathcal{D}_i \}$$
    \ENDFOR
\STATE $\mathcal{D}' = \cup_{i=1}^{M} \mathcal{D}_i'$    
\STATE {\bfseries Return:} $\mathcal{D}'$
\end{algorithmic}
\end{algorithm}

The remaining problem is how to obtain soft labels for training ReDT. Once a well-trained teacher model $f_{\bm{\theta}}(\cdot)$ is given, the generation of the soft label is straightforward. However, we only have training samples in most cases. Under such circumstance, the most straightforward idea is to train the teacher model using all training samples and then generate soft labels. Unfortunately, the soft label obtained through such process has relatively poor quality due to the adverse effects of overfitting, $i.e.$, the generated soft label is closed to its hard version. To address this problem, we propose a novel \emph{jackknife-based distillation method}, as shown in Algorithm \ref{alg: Jackknife}. To reduce the effect of randomness, we run this process several times and adopt their average to be the final soft label. Note that this problem does not exist in previous knowledge distillation, since their training of student model and teacher model is carried out simultaneously rather than strictly one after another, thanks to these models can both be trained through back propagation. %The calculation process of the proposed method is easy to be parallelized, therefore it with high efficiency.

\subsection{Rectified Decision Trees (ReDT)}\label{TC}
After generating soft labels, we can adopt them to train the model. In this section, we propose a novel decision trees extension, dubbed rectified decision trees (ReDT), which allows training with soft labels while preserving deterministic splitting paths. Specifically, the difference between ReDT and DT lies in two aspects, including \textbf{(1)} \emph{splitting criteria} and \textbf{(2)} \emph{pure ending condition}.

\vspace{0.3em}
\noindent \textbf{Splitting Criteria.}
Recall that the impurity decrease caused by splitting point $v$ is denoted by
\begin{equation}\label{1}
    I(v) = T(\mathcal{D}) - \frac{|\mathcal{D}_l|}{|\mathcal{D}|}T(\mathcal{D}_l)
    -\frac{|\mathcal{D}_r|}{|\mathcal{D}|}T(\mathcal{D}_r),
\end{equation}
where $\mathcal{D}_{l}, \mathcal{D}_{r}$ are two children sets from $\mathcal{D}$ splitting at $v$, $T(\cdot)$ is the impurity criterion ($e.g.$, Shannon entropy or Gini index). In ReDT, the probability $p_i$, which implies the proportion of the samples with $i$-th category, used in calculating the impurity decrease is now calculated through soft labels. 
%%%%%%%%
\comment{
Specifically, since each sample uses the soft label instead of the hard one, we calculate the average of the soft label of all training samples in the node and finally obtain a $K$-dimensional vector. At this time, $p_i$ is redetermined as the value of the $i$-th dimension of that vector. In other words
}
Specifically, let $\bm{y}_{soft}^{(j)}=\left(y_{1}^{(j)},y_{2}^{(j)},\cdots,y_{K}^{(j)}\right)$ denotes the soft label of $j$-th training sample contained in the node $\mathcal{N}$, then the probability $p_i$ of node $\mathcal{N}$ is calculated as
\begin{equation}
    p_i=\frac{1}{|\mathcal{N}|}\sum_{\left(\bm{x}^{(j)}, \bm{y}_{soft}^{(j)}\right)\in \mathcal{N}} y_{i}^{(i)},
\end{equation}
where $|\mathcal{N}|$ denotes the number of samples in the node $\mathcal{N}$.

\begin{algorithm}[ht]
   \caption{The Training Process of ReDT. }
   \label{alg-dt}
\begin{algorithmic}[1]
\REQUIRE ~~\\
Training dataset $\mathcal{D}=\left\{(\bm{x},\bm{y}_{mixed})\right\}$;\\
Minimum leaf size $k$;
\ENSURE ~~\\ 
The rectified decision trees $T \leftarrow ReDT(\mathcal{D}, k)$;
   \STATE Calculate pseudo-category $y_{pseudo}$ of each sample in $\mathcal{D}$ by Eq. \eqref{pseudo}.
   \STATE Determine whether the node is pure based on whether each sample in $\mathcal{D}$ has the same pseudo-category.
        \IF{sample size  $>k$, and the node is not pure} 
         \STATE Calculate the impurity decrease vector $I$ according to Eq. \eqref{1}.
        \STATE Select the splitting point with maximum impurity reduction criterion. 
        \STATE The training set $\mathcal{D}$ correspondingly is split into two child nodes, called $\mathcal{D}_{l}, \mathcal{D}_{r}$.
        \STATE $T.leftchild \leftarrow ReDT(\mathcal{D}_l, k)$
        \STATE $T.rightchild \leftarrow ReDT(\mathcal{D}_r, k)$
        \ENDIF
\STATE {\bfseries Return:} $T$.
\end{algorithmic}
\end{algorithm}

\vspace{0.3em}
\noindent \textbf{Pure Ending Condition.}
The second alteration is how to define \emph{pure} to determine whether the splitting should be terminated. In the standard decision trees, if all samples in a node have the same category, the node is considered to be pure. However, it is almost impossible for all soft labels within a node to be the same, therefore the traditional pure stopping criterion will never meet. Accordingly, we use the \emph{pseudo label} $y_{pseudo}$ to determine whether the node is pure by evaluating the pseudo label of all samples in the node. The \emph{pseudo label} is the class with the highest probability of the soft label, $i.e.$,

\begin{equation}\label{pseudo}
    y_{pseudo}=\arg \max \bm{y}_{soft}.    
\end{equation}

Besides, instead of using the soft label of samples directly, we use the \emph{mixed label} $\bm{y}_{mixed}$, which is the weighted average of soft label and hard label with weight hyper-parameter $\alpha \in [0, 1]$. That is,
\begin{equation}\label{mixed}
    \bm{y}_{mixed} = \alpha \cdot\bm{y}_{hard} + (1-\alpha) \bm{y}_{soft}.
\end{equation}
The hyper-parameter $\alpha$ plays a role in regulating the proportion of using the soft label. The larger $\alpha$, the smaller the proportion of the soft label in the mixed label. When $\alpha = 1$, the ReDT degrades into the Breiman's decision trees. The purpose of using mixed labels is to consider that the soft label may have a certain degree of error. By adjusting the hyper-parameter $\alpha$, we can obtain the mixed label with sufficient distilled knowledge and relatively high accuracy.

The overall training process is shown in Algorithm \ref{alg-dt}.

\vspace{0.3em}
\noindent \textbf{The Prediction of ReDT. }
Once the ReDT has grown based on mixed labels as described in Algorithm \ref{alg-dt}, the prediction for a newly given sample $\bm{x}$ is similar to the standard decision trees \cite{breiman1984classification}. Specifically, suppose the predicted label and predicted logits of sample $\bm{x}$ is $\hat{y}$ and $\bm{P}=(\hat{p}_1, \cdots, \hat{p}_K)$ respectively. According to a series of decisions, $\bm{x}$ will eventually fall into a leaf node, assuming that node is $\mathcal{V}$. The predicted logits of $\bm{x}$ is the average of mixed labels of all training samples within node $\mathcal{V}$, $i.e.$,
\begin{equation}
    \bm{P}=(\hat{p}_1, \cdots, \hat{p}_K) = \frac{1}{|\mathcal{V}|} \sum_{\left(\bm{x}^{(i)}, \bm{y}_{mixed}^{(i)}\right) \in \mathcal{V}} \bm{y}_{mixed}^{(i)},
\end{equation}
where $|\mathcal{V}|$ denotes the number of samples in the leaf node $\mathcal{V}$.

The predicted label of $\bm{x}$ is the one with highest probability in $\bm{P}$:
\begin{equation}
    \hat{y}= \arg \max_i \hat{p_i}.
\end{equation}

\subsection{Effectiveness Verification of Soft Labels}\label{EA}
The reason why soft labels rather than hard labels should be used can be further verified from the perspective of impurity calculation. 

\begin{lemma}[Integer Partition Lemma]\label{l1}
Suppose there is an integer $N$, which is the sum of $k$ integers $n_1, \cdots, n_k$, $i.e.$, 
$$
N=n_1+n_2+\cdots+n_k.
$$
There are totally $C_{n+k-1}^{k-1}=\frac{(n+k-1)!}{(k-1)!n!}$ possible values for the ordered pair $(n_1, \cdots,n_k)$.
\end{lemma}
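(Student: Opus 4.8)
The plan is to recognize this as the classical \emph{stars and bars} counting identity and to prove it by exhibiting an explicit bijection between the set of admissible tuples and a set of binary arrangements whose cardinality is manifestly $C_{n+k-1}^{k-1}$. Before starting I would first fix the (evidently intended) conventions: the summands $n_1, \ldots, n_k$ are \emph{nonnegative} integers and $N = n$, so that the claim concerns the number of ordered $k$-tuples $(n_1, \ldots, n_k)$ with $n_i \ge 0$ and $n_1 + \cdots + n_k = n$. Making this explicit matters, because permitting $n_i = 0$ is precisely what produces the exponent $k-1$ rather than $k$ in the formula.

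First I would encode each solution $(n_1, \ldots, n_k)$ as a string of $n$ identical ``stars'' and $k-1$ ``bars'': write $n_1$ stars, then a bar, then $n_2$ stars, then a bar, and so on, terminating with $n_k$ stars after the final bar. This produces a string of total length $n + (k-1)$ containing exactly $k-1$ bars. Conversely, any string of that length with exactly $k-1$ bars determines a unique tuple, obtained by reading off the number of stars in each of the $k$ blocks delimited by the bars (empty blocks, i.e.\ adjacent bars or a bar at either end, correspond exactly to the entries $n_i = 0$). I would then check that these two maps are mutual inverses, which establishes the bijection.

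Given the bijection, the count reduces to counting strings of length $n + k - 1$ that contain exactly $k - 1$ bars, and this is simply the number of ways to choose the $k-1$ bar positions among the $n + k - 1$ available slots, namely $C_{n+k-1}^{k-1}$, yielding the claimed formula. As a fallback I would keep in reserve an induction on $k$: the base case $k=1$ is immediate, and the inductive step partitions solutions according to the value of $n_k \in \{0, \ldots, n\}$ and then invokes the hockey-stick identity $\sum_{m=0}^{n} C_{m+k-2}^{k-2} = C_{n+k-1}^{k-1}$. The argument presents no serious obstacle; the only point requiring genuine care is verifying that the stars-and-bars encoding is a true bijection --- that distinct tuples never collapse to the same string (injectivity) and that every admissible string arises from some tuple (surjectivity) --- which, together with the nonnegativity convention, is exactly what makes the counting valid.
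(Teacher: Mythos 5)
Your proposal is correct and is essentially the same argument as the paper's: the paper's one-line proof (``equivalent to picking $k-1$ locations from $n+k-1$ locations'') is precisely the stars-and-bars encoding you spell out in full. Your version is simply more careful, usefully making explicit the nonnegativity convention, the identification $N=n$, and the bijectivity check that the paper leaves implicit.
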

\begin{proof}
This problem is equivalent to pick $k-1$ locations randomly from $n+k-1$ locations. The result is trivial based on the basics of number theory. 
\end{proof}

Lemma \ref{l1} indicates that for a $K$-classification problem, if the node $\mathcal{N}$ contains $N$ samples, then the impurity of this node has at most $C_{N+K-1}^{K-1}$ possible values. In other words, compared to use soft labels, the use of hard labels limits the precision of the impurity of the nodes. This limitation has a great adverse effect on the selection of the split point, especially when the number of samples is relatively small.

%From another perspective, the improvement brought by soft labels is since it is tough to recover the distribution of $(X, Y)$ with hard labels directly, especially when the number of samples is relatively small. However, once the relatively correct softened labels are provided, a large amount of information of the distribution is contained in it. The use of this information about the distribution makes the decision surface offset towards the real position compared to when using the hard label.

\begin{table}[ht]
\caption{The description of benchmark datasets.}
\label{table1}
\begin{center}
\begin{sc}
\begin{tabular}{lccc}
\toprule
Data set & Categories & Features & Instances   \\
\midrule
ADULT      & 2  & 14  & 48842 \\
CRX        & 2  & 15  & 690   \\
EEG        & 2  & 15  & 14980 \\
BANK       & 2  & 17  & 45211 \\
GERMAN     & 2  & 20  & 1000  \\
CMC        & 3  & 9   & 1473  \\
CONNECT-4  & 3  & 42  & 67557 \\
LAND-COVER & 9  & 147 & 675   \\
LETTER     & 26 & 15  & 20000 \\
ISOLET     & 26 & 617 & 7797  \\ \hline
MNIST      & 10 & 784   & 60000 \\
\bottomrule
\end{tabular}
\end{sc}
\end{center}
\end{table}

\section{Experiments}
\label{exp}
\subsection{Experimental Configuration}
\noindent \textbf{Benchmark Datasets.}
Following the configuration of decision trees variants \cite{breiman1996, irsoy2012, hara2018}, the benchmark datasets are selected from the UCI repository \cite{UCI} in the evaluation of the ReDT with forest-based teachers. The performance of ReDT with DNN-based teachers is evaluated on the MNIST dataset \cite{lecun1998}. The detailed information about benchmark datasets is shown in Table \ref{table1}.

\vspace{0.3em}
\noindent \textbf{The Settings of Teacher Model.}
In this paper, we evaluate two types of teacher model ($i.e.$, the one generates soft labels), including forest-based teacher and DNN-based teacher. Specifically, we evaluate ReDT with two most widely used forest-based models. including random forests (RF) \cite{breiman2001} and gradient boosting decision trees (GBDT) \cite{friedman2001}. They are the representative of the bagging and boosting method in forest-based teachers, respectively. In the evaluation with DNN-based teachers, we examine a variety of DNN architectures, including MLP \cite{rumelhart1985learning}, LeNet-5 \cite{lecun1998}, and VGG-11 \cite{Simonyan2014}. The MLP has two hidden layers, with 784 and 256 units respectively.

\vspace{0.3em}
\noindent \textbf{Training Setup.} In the experiments with forest-based teachers, five times 5-folds jackknife-based distillation method is used to generate soft labels of training samples. When generating soft labels with DNN-based teachers, the standard temperature-based method is used with temperature $T=4$ as suggested in \cite{hinton2015}. 
The Gini index was used in RF, DT and ReDT as the impurity measure, the minimum leaf size $k=5$ is set for RF, GBDT, DT, and ReDT as suggested in \cite{breiman2001}, and the \emph{pruning} technique is not involved in all methods. The number of trees contained in both RF and GBDT is set to $100$. We train $50$ epochs for all DNNs with Adam optimizer and an initial learning rate of $0.1$. The learning rate is decreased by a factor $10$ at epochs $20$ and $40$, respectively. We determine the value of $\alpha$ by grid search with a step of $0.1$ in the range $[0,1]$, and the effect of $\alpha$ is further evaluated in Section \ref{Abalation}. All forest-based teachers are implemented based on the scikit-learn platform \cite{pedregosa2011}, and the implementation of DNN-based teachers is based on Pytorch deep learning framework \cite{paszke2019pytorch}. 

\noindent \textbf{Evaluation Setup.} The performance of ReDT is evaluated mainly based on the test accuracy and the number of nodes in this paper. Compared with standard decision trees, ReDT with better performance (higher accuracy or smaller nodes) is indicated in boldface. We carry out the experiment ten times to reduce the effect of randomness. Besides, we also conduct Wilcoxons signed-rank test \cite{demvsar2006} to verify whether there exists a difference between the results of the ReDT and those of decision trees at significance level $0.05$. Those that have a statistically significant difference from the decision trees are marked with "$\bullet$".

\begin{table*}[ht]
\caption{Comparison among ReDT with forest-based teachers and other methods. We examine ReDT with two teacher models, including RF and GBDT, dubbed ReDT (RF) and ReDT (GBDT), respectively. $\alpha^{*}$ indicates the average of all best $\alpha$ for each experiment. }
\label{table2}
\begin{center}
\begin{threeparttable}
\begin{tabular}{l|c|ccc|ccc}
\hline
Dataset    & DT      & RF      & {\upshape ReDT (RF)}         & $\alpha^{*}$ (RF) & GBDT    & {\upshape ReDT (GBDT)}       & $\alpha^{*}$ (GBDT) \\ \hline
ADULT      & 81.86\% & 86.54\% & \textbf{86.18\%}$^{\bullet}$ & 0.01              & 86.53\% & \textbf{86.16\%}$^{\bullet}$ & 0.06                \\
CRX        & 80.51\% & 86.14\% & \textbf{85.46\%}$^{\bullet}$ & 0                 & 86.09\% & \textbf{84.40\%}$^{\bullet}$ & 0.11                \\
EEG        & 82.88\% & 81.50\% & \textbf{83.02\%}             & 0.24              & 90.58\% & \textbf{83.01\%}             & 0.52                \\
Bank       & 87.60\% & 90.38\% & \textbf{90.11\%}$^{\bullet}$ & 0.06              & 90.41\% & \textbf{90.15\%}$^{\bullet}$ & 0.03                \\
German     & 68.37\% & 76.60\% & \textbf{73.40\%}$^{\bullet}$ & 0.07              & 76.13\% & \textbf{72.67\%}$^{\bullet}$ & 0.1                 \\
CMC        & 48.31\% & 55.15\% & \textbf{55.05\%}$^{\bullet}$ & 0                 & 55.66\% & \textbf{55.41\%}$^{\bullet}$ & 0                   \\
CONNECT-4  & 71.73\% & 75.38\% & \textbf{76.69\%}$^{\bullet}$ & 0.3               & 77.58\% & \textbf{76.02\%}$^{\bullet}$ & 0.3                 \\
LAND-COVER & 76.55\% & 83.69\% & \textbf{77.59\%}             & 0.54              & 83.80\% & \textbf{77.14\%}             & 0.37                \\
LETTER     & 85.65\% & 91.56\% & \textbf{86.01\%}             & 0.9               & 93.61\% & \textbf{86.15\%}             & 0.9                 \\
ISOLET     & 79.83\% & 93.68\% & \textbf{81.40\%}$^{\bullet}$ & 0.57              & 93.32\% & \textbf{81.77\%}$^{\bullet}$ & 0.33                \\ \hline
\end{tabular}
\end{threeparttable}
\end{center}
\end{table*}

\begin{table*}[ht]
\caption{The number of nodes of different methods. We examine ReDT with two teacher models, including RF and GBDT, dubbed ReDT (RF) and ReDT (GBDT), respectively.}
\label{table3}
\begin{center}
\begin{threeparttable}
\begin{tabular}{l|c|cc|cc}
\hline
Dataset    & DT     & RF      & {\upshape ReDT} (RF)       & GBDT   & {\upshape ReDT} (GBDT)     \\ \hline
ADULT      & 7,869  & 244,832 & \textbf{2,286}$^{\bullet}$ & 1,486  & \textbf{2,023}$^{\bullet}$ \\
CRX        & 103    & 6,191   & \textbf{48}$^{\bullet}$    & 1,336  & \textbf{65}$^{\bullet}$   \\
EEG        & 1,858  & 125,289 & 1,948                      & 1,489  & 1,939                     \\
BANK       & 4,302  & 223,906 & \textbf{1,678}$^{\bullet}$ & 1,470  & \textbf{1,603}$^{\bullet}$ \\
GERMAN     & 227    & 11,063  & \textbf{140}$^{\bullet}$   & 1,404  & \textbf{172}$^{\bullet}$   \\
CMC        & 630    & 16,220  & \textbf{202}$^{\bullet}$   & 4,206  & \textbf{275}$^{\bullet}$   \\
CONNECT-4  & 18,152 & 470,261 & \textbf{8,813}$^{\bullet}$ & 4,426  & \textbf{8,740}$^{\bullet}$ \\
LAND-COVER & 85     & 6,100   & \textbf{43}$^{\bullet}$    & 9,492  & \textbf{49}$^{\bullet}$    \\
LETTER     & 2,752  & 168,101 & \textbf{2464}$^{\bullet}$  & 38,631 & \textbf{2,459}$^{\bullet}$ \\
ISOLET     & 707    & 58,905  & \textbf{464}$^{\bullet}$   & 32,751 & \textbf{593}$^{\bullet}$   \\ \hline
\end{tabular}
\end{threeparttable}
\end{center}
\end{table*}

\subsection{ReDT with Forest-based Teachers}\label{33}
In this section, we compare the performance of ReDT and the corresponding forest-based teacher model. Besides, the performance of the decision trees trained with hard labels is also provided for reference.  Specifically, we evaluate two performance metrics, including test accuracy and the number of nodes. The first one is used to measure the accuracy, while the second evaluates the size of the model. As shown in Table \ref{table2}, the test accuracy of ReDT is better than the one of decision trees on all datasets regardless of which teacher model is used. Specifically, ReDT has achieved an increase of almost $5\%$ accuracy compared to DT on half of the datasets, and the improvement is significant on seven of those datasets. In particular, on three datasets (BAND, ADULT, and CONNECT-4), ReDT is on par with its teacher model. 

It is interesting to note that the nodes of ReDT are significantly less than those of DT on almost all datasets (except for EEG dataset), as shown in Table \ref{table3}. Not to mention that compared to the bagging-based teacher model ($ i.e. $ RF), the number of ReDT nodes is 2 orders of magnitude less. Even compared to GBDT, ReDT has fewer nodes most of the time. This phenomenon may probably come from two aspects: \textbf{(1)} The information about the latent distribution $\mathbb{P}_{X \times Y}$ contained in the soft label learned from teacher model better directs the splitting of ReDT; \textbf{(2)} The impurity calculated using the soft label instead of the hard one better approximate the ground-truth one and therefore better guides the splitting. The specific reason is further explored in Section \ref{Abalation}. 

Besides, although the value of optimal $\alpha$ is obtained through the grid search and is different across different datasets, it seems to have a direct connection with the number of categories. Specifically, datasets with more categories (such as LAND-COVER, ISOLET, and LETTER) have a larger optimal $\alpha$. It is presumably due to two reasons:  \textbf{(1)} The more categories, the more likely the soft labels will contain more error information; \textbf{(2)} The more categories, the higher the interference caused by error information contained in soft labels. Regardless of the reason, the number of categories of samples can be used to provide the initial intuition of $\alpha$. Overall, ReDT with the forest-based teachers has better performance compared with the standard DT, while preserves its interpretable nature and efficient training process. The specific connection between $\alpha$ and categories will be further studied in our future works.

\begin{table}[ht]
\caption{Comparison among ReDT with DNN-based teachers and other methods. We examine three DNN architectures, including MLP, LeNet-5, and VGG-11.}
\label{mnist}
\begin{center}
\begin{tabular}{|l|c|c|c|}
\hline
            & MLP & {\upshape LeNet-5} & VGG-11 \\ \hline
Accuracy of DNN   & 98.33\%   & 99.42\%       & 99.49\%      \\ \hline
Accuracy of {\upshape ReDT}   & \textbf{88.21\%} & \textbf{88.57\%}$^{\bullet}$ & \textbf{88.53\%}$^{\bullet}$ \\ \hline
Nodes of {\upshape ReDT}  & \textbf{5,361}$^{\bullet}$    & \textbf{5,173}$^{\bullet}$    & \textbf{5,803}$^{\bullet}$    \\  \hline
Accuracy of DT    & \multicolumn{3}{c|}{87.55\%} \\ \hline
Nodes of DT   & \multicolumn{3}{c|}{5,957} \\ \hline
\end{tabular}
\end{center}
\end{table}

\subsection{ReDT with DNN-based Teachers}
In this section, we compare the performance of ReDT and of its corresponding DNN-based teacher model. As shown in Table \ref{mnist}, similar to the scenario with forest-based teachers, ReDT has higher test accuracy and less nodes compared with decision trees, regardless of which DNN architecture is used. The improvement of performance is also significant in most cases. We notice that there is still a gap in accuracy between ReDT and its teacher model. This limitation may be because the tree-types learning method cannot learn the spatial relationships among the raw pixels.

\subsection{Interpretability of ReDT}
Recall that the interpretability indicates that the prediction process of the model is interpretable. Although the soft label and other techniques are introduced, ReDT retains the excellent interpretability of the standard decision trees. This is because the prediction is still made depending on the leaf node to which the input $\bm{x}$ belongs, while the corresponding leaf node is determined by traversing the tree from the root. 

As we mentioned before, the interpretability is critical in many realms, such as finance and medicine. We use one of the finance dataset, the GERMAN credit dataset, to show certain decision rule examples for further demonstration. In Fig. \ref{interpretability}, we visualize two paths in the tree built on GERMAN, which represents the credit of the customer is classified as good or bad respectively. According to the decision path, the one without checking account, other installment plans and the exorbitant credit amount is considered to have good credit. If a person has a very low level of debt, then it is reasonable to conclude that he has good credit. Similarly, it is also reasonable that people with checking accounts and long duration will be judged with bad credit. 

\begin{figure}[ht]
\centering
\includegraphics[width=0.48\textwidth]{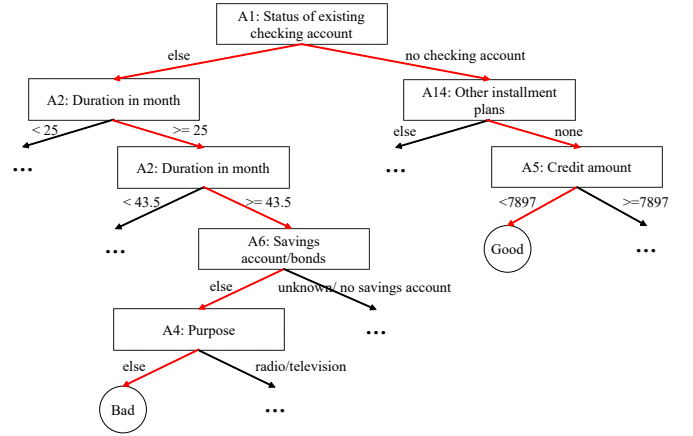}
\caption{Part of the ReDT trained on GERMAN credit dataset. (Two specific decision paths are marked in red.)}
\label{interpretability}
\end{figure}

\begin{table}[ht]
\caption{Comparison of rules between DT and ReDT with different forest-based teachers. $R$ indicates the ratio between the rules of DT and that of ReDT.}
\label{table6}
\begin{center}
\begin{threeparttable}
\small
\scalebox{0.85}{
\begin{tabular}{l|c|cc|cc}
\hline
Dataset    & DT    & {\upshape ReDT (RF)} & $R_1$ & {\upshape ReDT (GBDT)} & $R_2$ \\ \hline
ADULT      & 3,941 & 1,122                & 3.51     & 1,130                  & 3.49        \\
CRX        & 52    & 23                   & 2.26     & 35                     & 1.49        \\
EEG        & 931   & 976                  & 0.95     & 962                    & 0.97        \\
BANK       & 2,153 & 902                  & 2.39     & 871                    & 2.47        \\
GERMAN     & 114   & 68                   & 1.68     & 93                     & 1.23        \\
CMC        & 315   & 98                   & 3.21     & 136                    & 2.32        \\
CONNECT-4  & 9,073 & 4,399                & 2.06     & 4,342                  & 2.09        \\
LAND-COVER & 43    & 21                   & 2.05     & 26                     & 1.65        \\
LETTER     & 1,383 & 1,324                & 1.04     & 1,284                  & 1.08        \\
ISOLET     & 354   & 242                  & 1.46     & 296                    & 1.20        \\ \hline
\end{tabular}
}
\end{threeparttable}
\end{center}
\end{table}

Besides, for the tree-based algorithms, the number of rules ($i.e.$, the decision path corresponding to the leaf node) determines the overall interpretability of the model. The fewer the rules, the easier it is to explain the mechanism behind the model. Without loss of generality, we use random forest and GBDT as the teacher model respectively for further discussion.

As shown in Table \ref{table6}, when the ReDT reaches its best performance through finding the best hyper-parameter $\alpha^{*}$, it still has fewer rules than the standard decision trees in almost all the datasets (except for the EEG dataset). More specifically, in most of the cases, the ratios $R$ between the rules of DT and that of ReDT are greater than 2. In other words, although both decision trees and ReDT are interpretable, ReDT has better overall interpretability.

\subsection{Effects of Hyper-parameter} \label{Abalation}
In this section, we discuss the effects of $\alpha$ in three aspects, including \textbf{(1)} overall interpretability, \textbf{(2)} prediction accuracy, and \textbf{(3)} compression rate. Without loss of generality, we also adopt random forest and GBDT as the teacher model here.

\begin{figure*}[ht]
\centering
\subfigure[RF]{
\label{fig5a}
\includegraphics[width=0.44\textwidth]{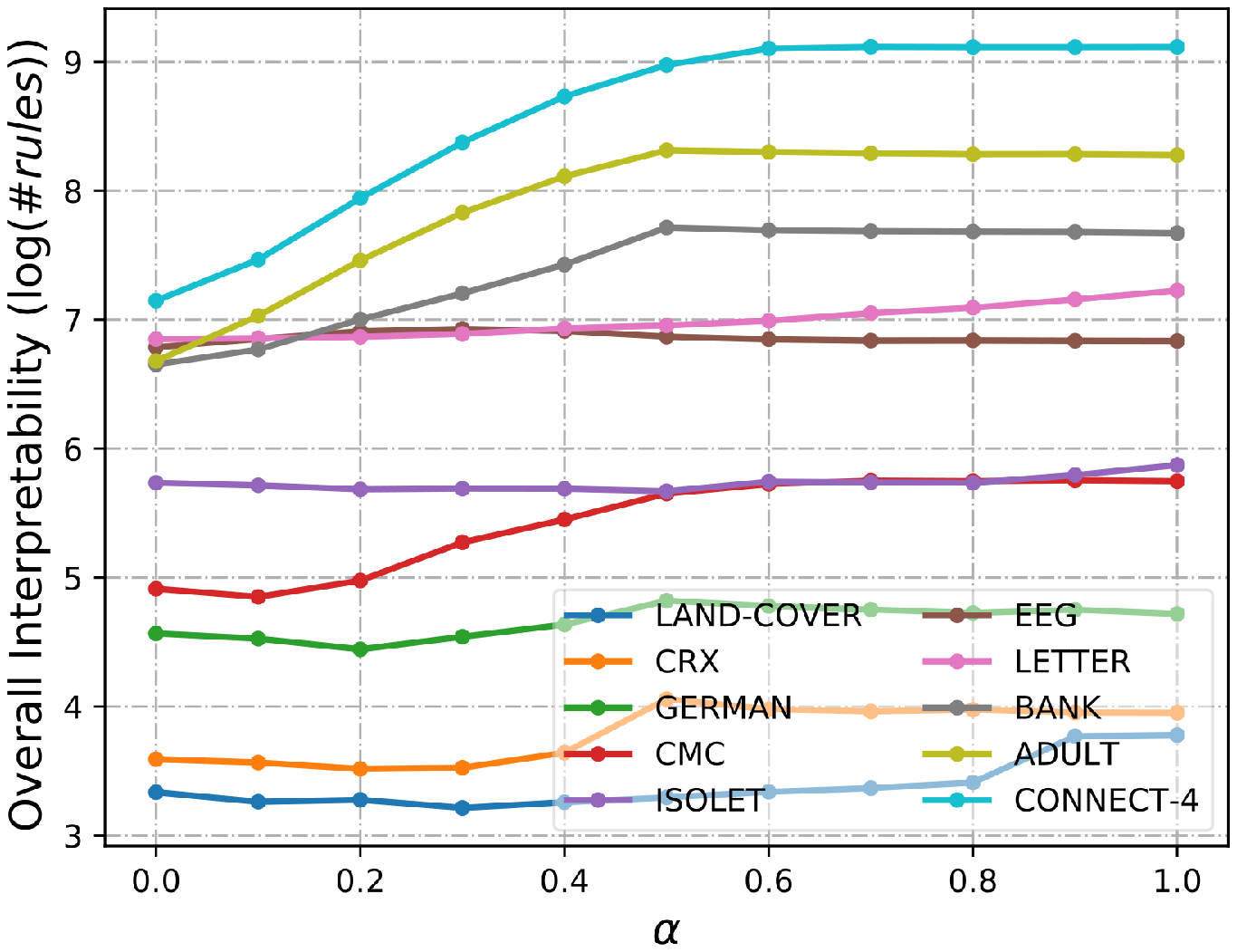}}
\subfigure[GBDT]{
\label{fig5b} %% label for second subfigure
\includegraphics[width=0.44\textwidth]{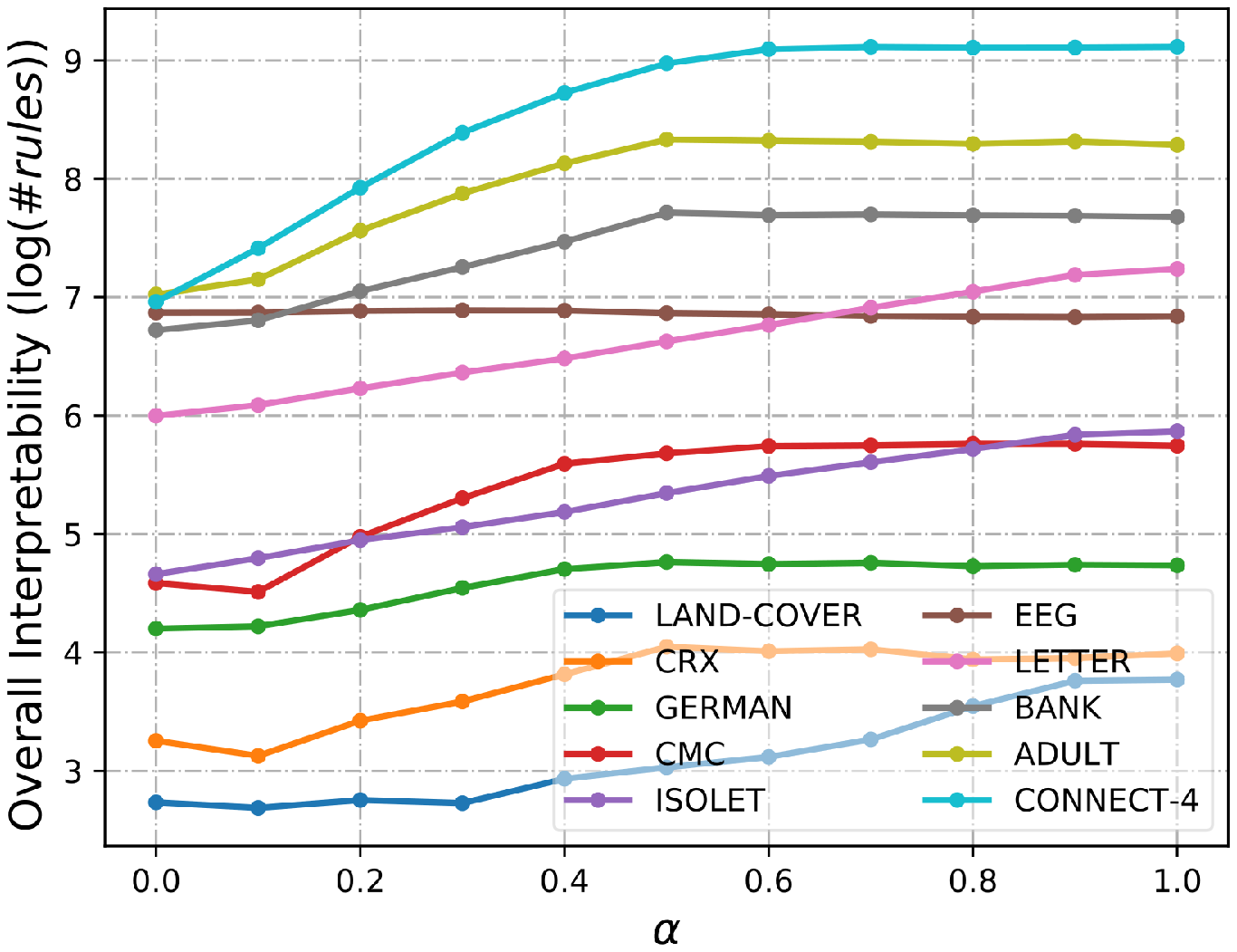}}
\caption{The logarithm number of rules of ReDT with different forest-based teachers under different $\alpha$.}
\label{Rule}
\end{figure*}

\vspace{0.3em}
\noindent \textbf{Overall Interpretability. }
We demonstrate the relation between the number of rules and the hyper-parameter $\alpha$ of ReDT on different datasets. As shown in Fig. \ref{Rule}, the number of rules increases with the hyper-parameter $\alpha$. In other words, the overall interpretability decreases with the increase of hyper-parameter $\alpha$. Recall that the larger the $\alpha$, the smaller the proportion of the soft label in the mixed label used for training ReDT. This phenomenon indicates that the `dark knowledge', which is probably the information about the latent distribution $\mathbb{P}_{X \times Y}$, contained in the soft label better directs the splitting of ReDT, no matter the information is correct or not.

\begin{figure*}[ht]
\centering
\subfigure[RF]{
\label{fig4a}
\includegraphics[width=0.44\textwidth]{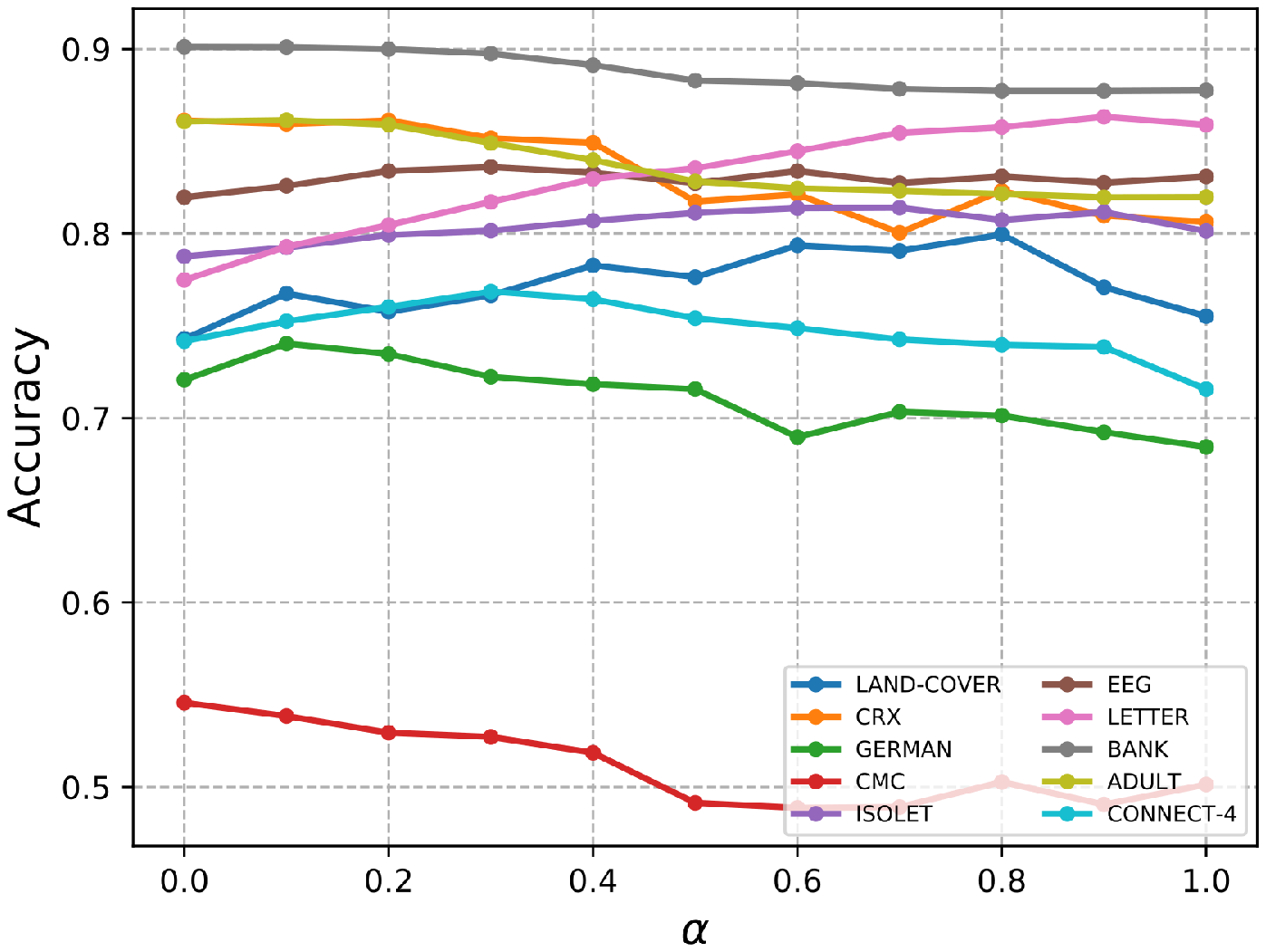}}
\subfigure[GBDT]{
\label{fig4b} %% label for second subfigure
\includegraphics[width=0.44\textwidth]{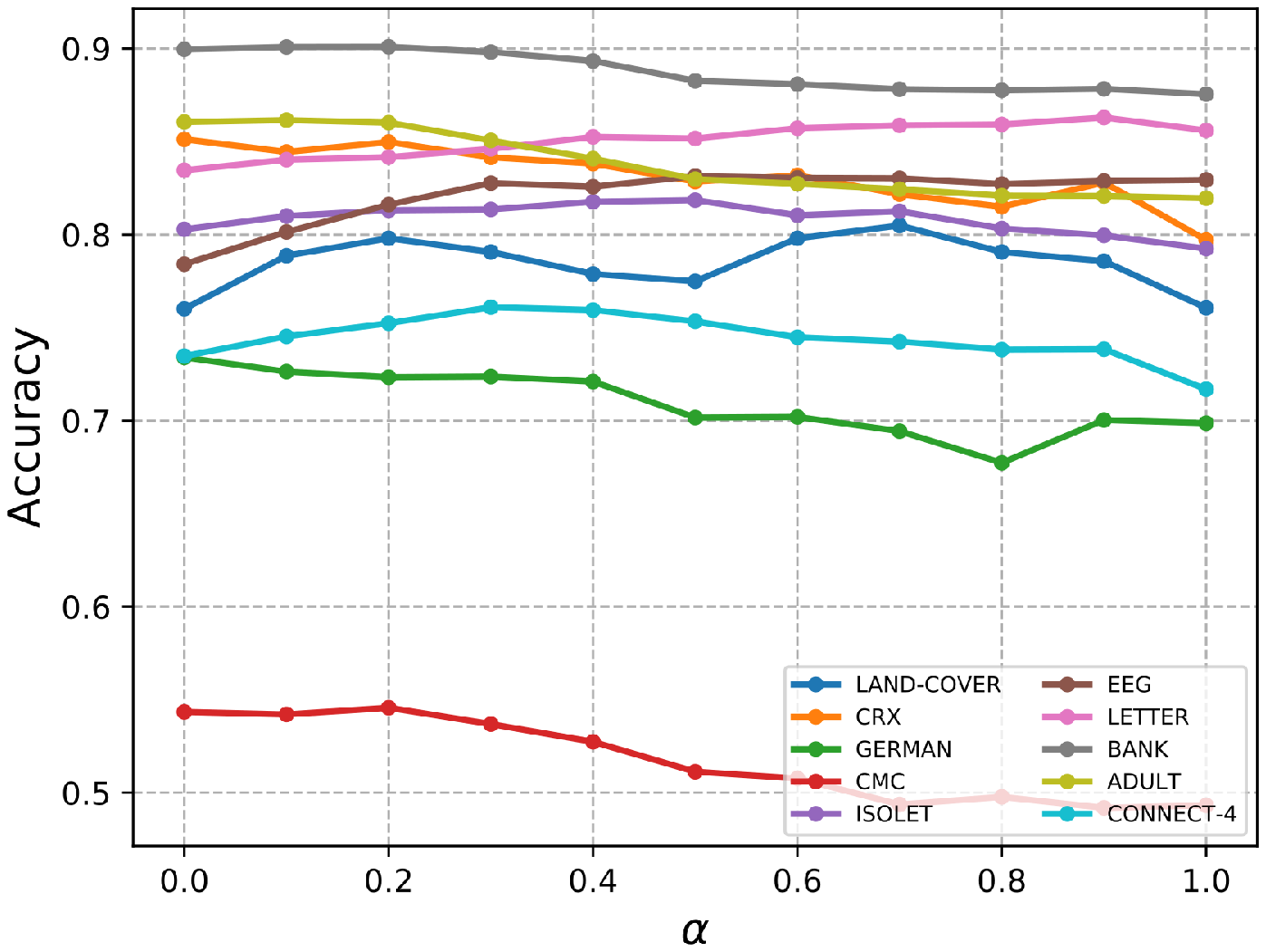}}
\caption{Test accuracy of ReDT with different forest-based teachers under different $\alpha$.}
\label{ACC}
\end{figure*}

\vspace{0.3em}
\noindent \textbf{Prediction Accuracy.}
As shown in Fig. \ref{ACC}, although the optimal value of $\alpha$ depends on the specific characteristics of a dataset, especially the number of categories as stated in Section \ref{33}. When $\alpha=0.2$, the ReDT achieves relatively competitive performance on all datasets. In other words, $\alpha=0.2$ is a good default value to some extent. Besides, since $\alpha$ only affects the training process of the tree and the training of tree is effective, even if the optimal $\alpha$ is obtained from the grid search, the computational consumption is still acceptable when selecting an appropriate step size.

\vspace{0.3em}
\noindent \textbf{Compression Rate.} 
Since ReDT is closely related to the standard DT, DT is a good reference to compare when evaluating the model size of ReDT. The compression rate is defined as the ratio of ReDT's nodes and DT's nodes. The smaller compression rate means the relatively smaller model size. 

As shown in Fig. \ref{Compression}, the compression rates of ReDT are less than 1 regardless of the dataset and $\alpha$ in most cases, which indicates that ReDT has a smaller model size than DT in general. Besides, similar to the circumstance in the overall interpretability, as the hyper-parameter $\alpha$ increases, the compression rate has an upward trend. This is an unexpected gift from the `dark knowledge' distilled soft labels, as demonstrated above.

\begin{figure*}[ht]
\centering
\subfigure[RF]{
\label{figa}
\includegraphics[width=0.43\textwidth]{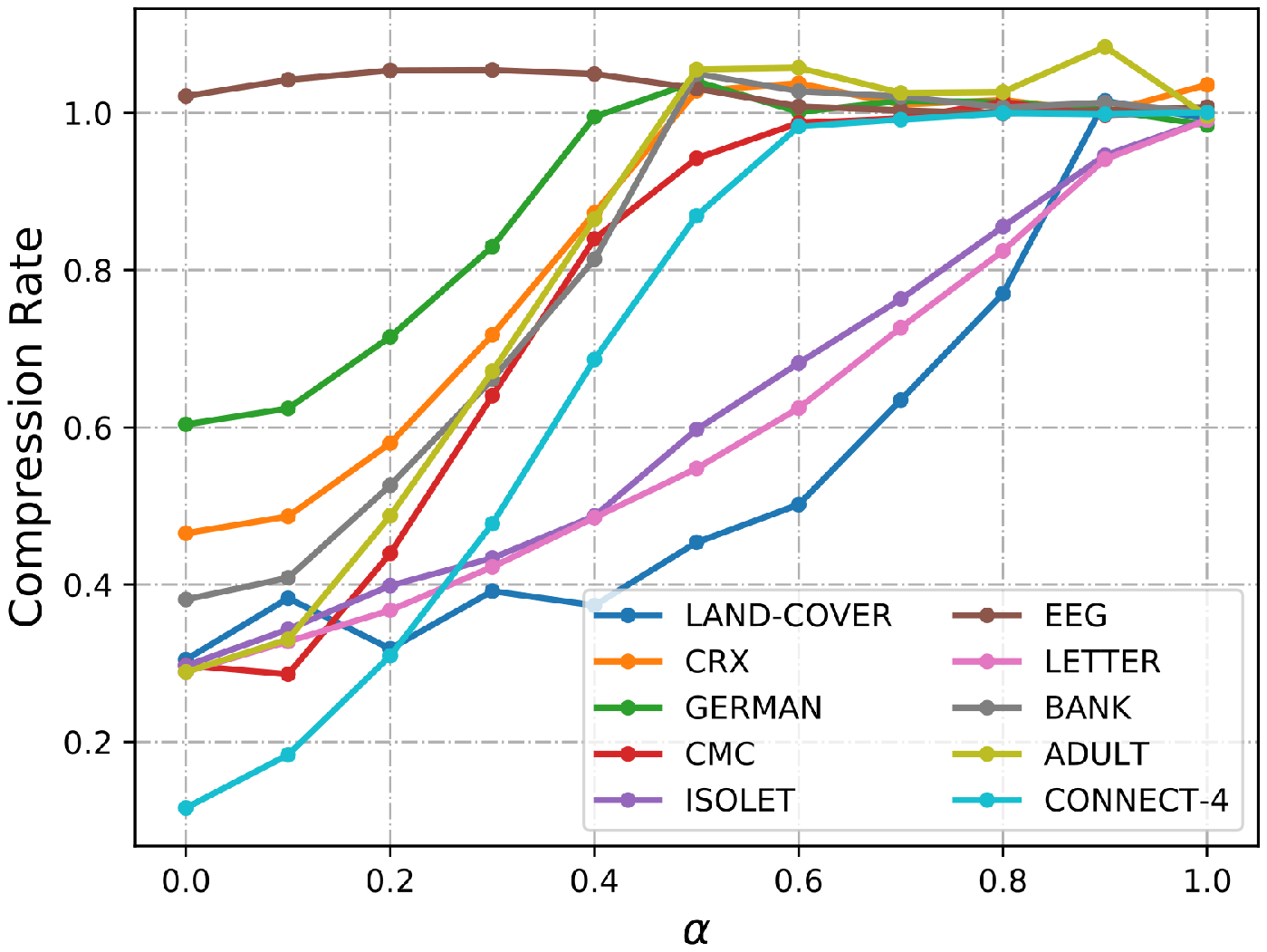}}
\subfigure[GBDT]{
\label{figb} %% label for second subfigure
\includegraphics[width=0.43\textwidth]{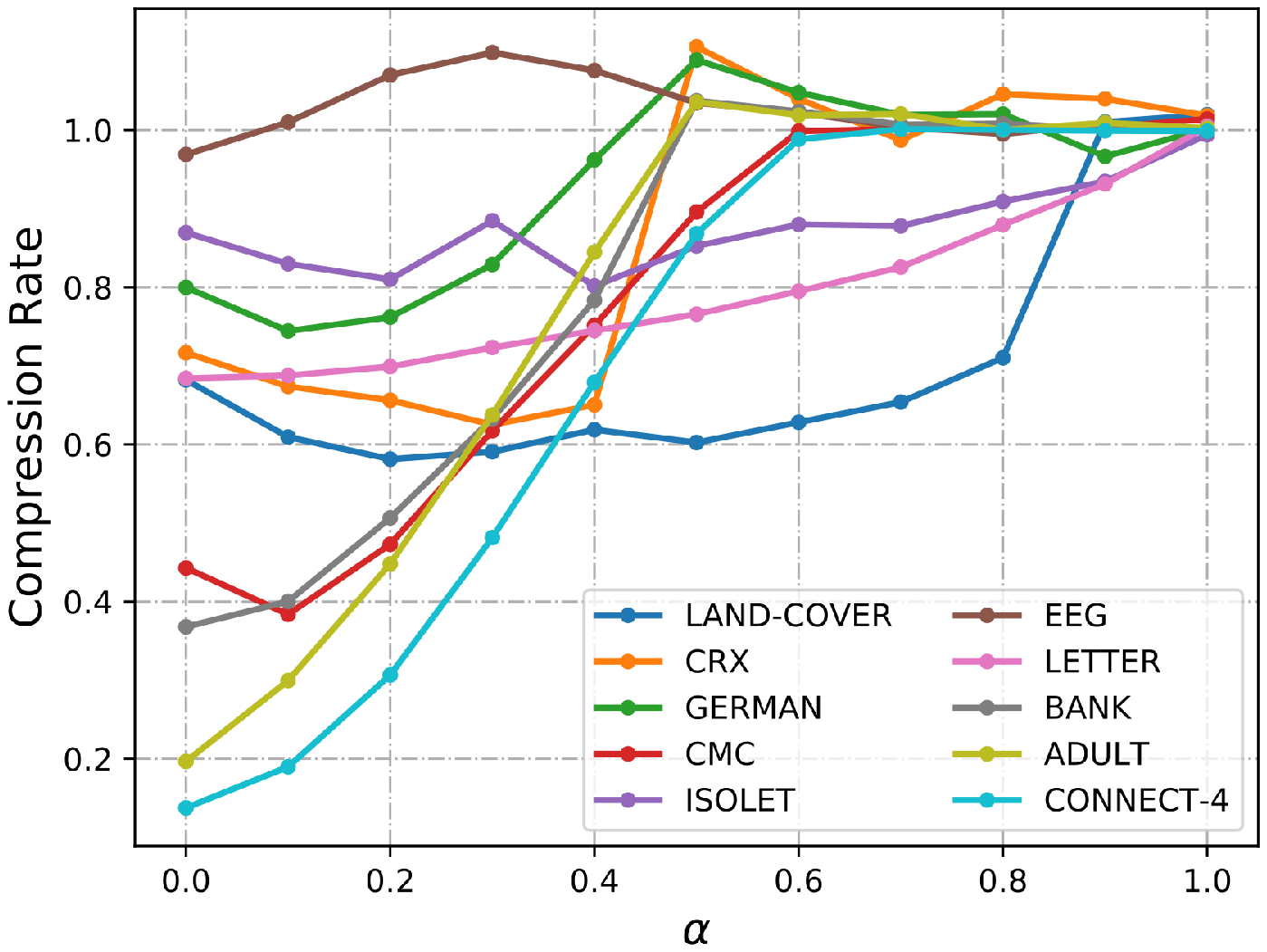}}
\caption{Compression rate of ReDT with different forest-based teachers.}
\label{Compression}
\end{figure*}

\begin{figure*}[ht]
\centering
\subfigure[RF]{
\label{fig2a}
\includegraphics[width=0.43\textwidth]{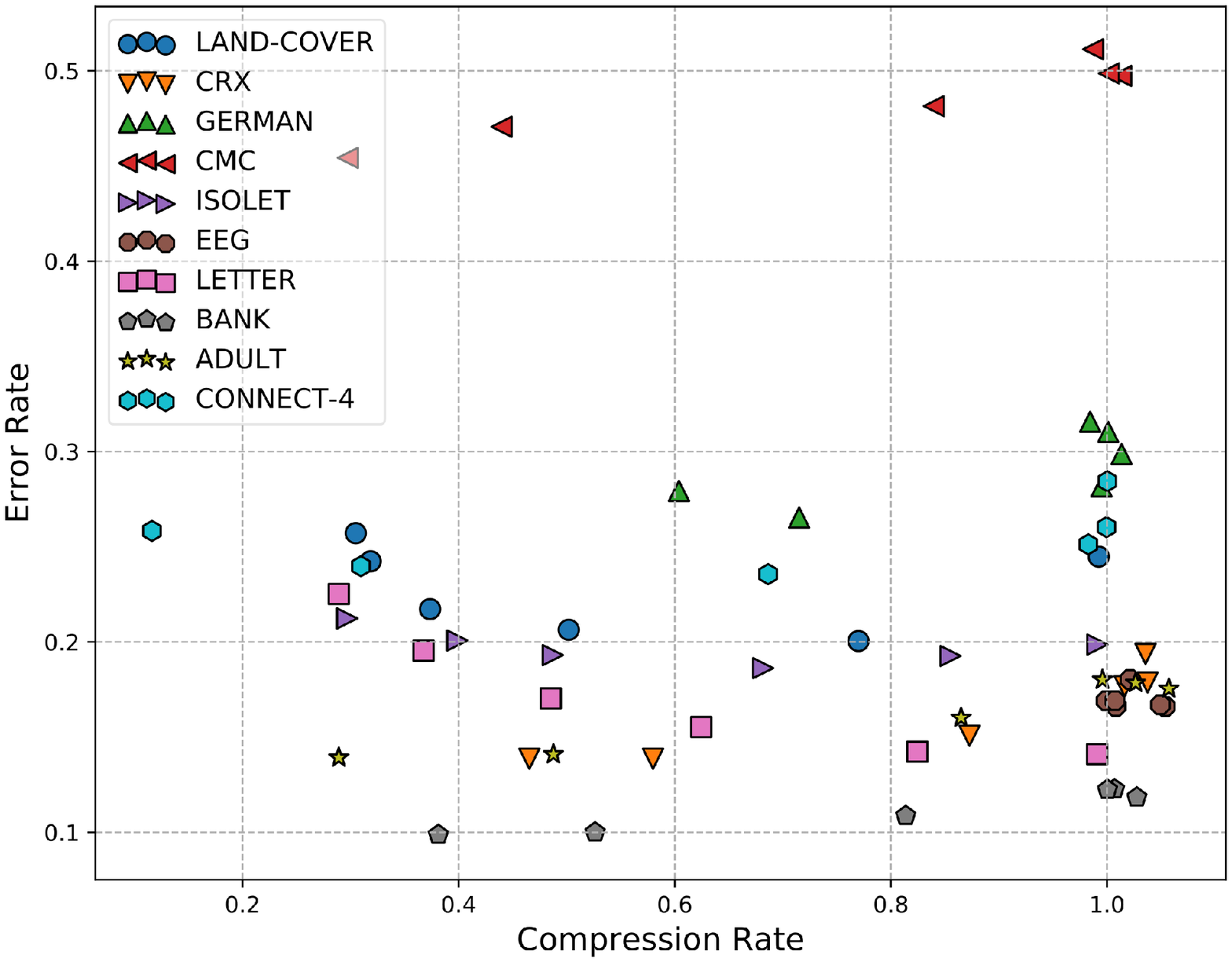}}
\subfigure[GBDT]{
\label{fig2b} %% label for second subfigure
\includegraphics[width=0.43\textwidth]{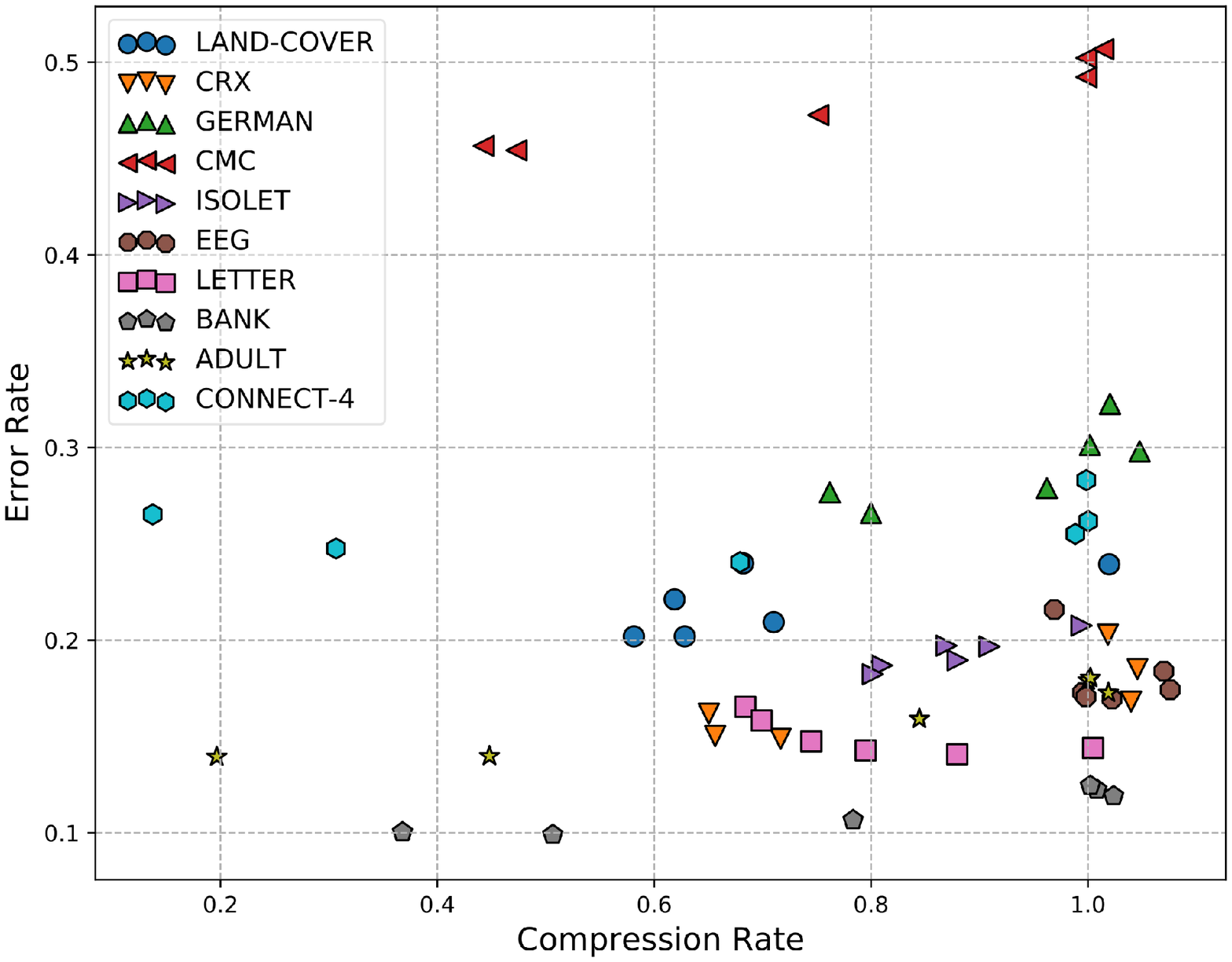}}
\caption{The relationship between error rate and compression rate of ReDT with different forest-based teachers.}
\label{ErrorCompression}
\end{figure*}

\newpage
\subsection{The Relationship between Accuracy and Compression Rate}
In this section, we discuss whether it is possible to obtain a ReDT that has a small model size and high accuracy simultaneously. 

We visualize the relationship between the error rate and the compression rate of ReDT with different forest-based teachers. As shown in Fig. \ref{ErrorCompression}, regardless of the teacher model, all datasets have points at the bottom left of the graph. This phenomenon implies that we can obtain a ReDT with the small model size and relatively high accuracy by adjusting the hyper-parameter $\alpha$. In other words, there is no trade-off between the model size and the accuracy of ReDT to some extent.

\section{Conclusion}
In this paper, we propose a knowledge distillation based decision trees extension, dubbed rectified decision trees (ReDT), to explore the possibility of fulfilling interpretability and effectiveness simultaneously.
Specifically, we extend the splitting criteria and the ending condition of the standard decision trees, which allows training with soft labels distilled from a well-trained teacher model while preserving the deterministic splitting paths.
In particular, we propose a jackknife-based distillation to obtain soft labels.
In contrast to traditional knowledge distillation approaches, backpropagation is not necessarily required for student model in the proposed distillation method. 
The effectiveness of adopting soft labels instead of hard ones is also analyzed empirically and theoretically. 
Besides, extensive experiments also indicate that trained ReDT has a smaller model size than standard decision trees from the aspect of the total nodes and the total rules, which is an unexpected gift from the information distilled from the teacher model.
The proposed method may provide a new angel towards interpretable machine learning.

\section*{Acknowledgments}
This work is supported partly by the National Natural Science Foundation of China under Grant 61771273, the R\&D Program of Shenzhen (JCYJ20180508152204044), the research fund of PCL Future Regional Network Facilities for Large-scale Experiments and Applications (PCL2018KP001). We also thank the support by the Natural Science Foundation of Zhejiang Province under Grant LSY19A010002.

\newpage

\bibliographystyle{IEEEtran}
\bibliography{ref}

% that's all folks
\end{document}